\newtheorem{proposition}{Proposition}
\newtheorem{remark}{Remark}
\newcommand{\R}{\mathbb{R}}
\newcommand{\X}{X}
\newcommand{\Y}{Y}
\DeclareMathOperator\erf{erf}
\newcommand{\dz}{\textrm{d}z}
\begin{document}

\begin{frontmatter}
        \title{A generative flow for conditional sampling via optimal transport}
        \runtitle{Conditional flow model via optimal transport}
        
        \begin{aug}
        \author{\fnms{Jason}~\snm{Alfonso}},
 		\author{\fnms{Ricardo} \snm{Baptista}\thanks{This paper results from a project in the Polymath Junior undergraduate research program. The project was mentored by RB and GT who designed the project. All the authors contributed to the code and performed the research. RB and GT wrote the paper. Corresponding authors: \texttt{rsb@caltech.edu}, \texttt{giulio.trigila@baruch.cuny.edu}.}},
        \author{\fnms{Anupam}~\snm{Bhakta}},
        \author{\fnms{Noam}~\snm{Gal}},
        \author{\fnms{Alfin}~\snm{Hou}},
        \author{\fnms{Isa}~\snm{Lyubimova}},
        \author{\fnms{Daniel}~\snm{Pocklington}},
        \author{\fnms{Josef}~\snm{Sajonz}},
        \author{
            \fnms{Giulio}~\snm{Trigila\textcolor{red}{$^{\ast}$},} 
        }
        \and
        \author{\fnms{Ryan}~\snm{Tsai}}
        
        \address[ja]{University of the Philippines Diliman\\
        \texttt{jasonbalfonso@gmail.com}}
        
  		\address[rsb]{%
        California Institute of Technology\\
        \texttt{rsb@caltech.edu}}
        
        \address[bc]{%
        Columbia University\\
        \texttt{ab5494@columbia.edu}}
        
        \address[ng]{Baruch College\\\texttt{noamjgal@gmail.com}}
        
        \address[ah]{University of Edinburgh\\\texttt{H.Hou-7@sms.ed.ac.uk}}
        
        \address[il]{Georgia Institute of Technology\\\texttt{vasilisa.constance@gmail.com}}

        \address[dp]{Grinnell College\\\texttt{pockling@grinnell.edu}}

        \address[js]{%
        University of Michigan\\
        \texttt{josefsaj@umich.edu}}
        
        \address[rt]{%
        Yale University\\
        \texttt{ryan.tsai@yale.edu}}

        \address[gt]{%
        Baruch College\\
        \texttt{giulio.trigila@baruch.cuny.edu}}

		\end{aug}
  
	\begin{abstract}
    Sampling conditional distributions is a fundamental task for Bayesian inference and density estimation. Generative models, such as normalizing flows and generative adversarial networks, characterize  conditional distributions by learning a transport map that pushes forward a simple reference (e.g., a standard Gaussian) to a target distribution. While these approaches successfully describe many non-Gaussian problems, their performance is often limited by parametric bias and the reliability of gradient-based (adversarial) optimizers to learn these transformations. This work proposes a non-parametric generative model %
    that iteratively maps reference samples to the target. The model uses block-triangular transport maps, whose components are shown to characterize conditionals of the target distribution. These maps arise from solving an optimal transport problem with a weighted $L^2$ cost function, thereby extending the data-driven approach in~\citet{trigila2016data} for conditional sampling. The proposed approach is %
    demonstrated on a two dimensional example and on a parameter inference problem involving nonlinear ODEs.
    \end{abstract}
		
    \begin{keyword}[class=AMS]
        \kwd{65C20} %
        \kwd[, ]{49Q22} %
        \kwd[, ]{62G07} %
    \end{keyword}
    \begin{keyword}[class=KWD]
        Conditional sampling, likelihood-free inference, generative model, optimal transport, normalizing flows
     \end{keyword}
     
\end{frontmatter}

\section{Introduction} \label{sec:introduction}

Characterizing the conditional distribution of parameters $\X \in \R^d$ in a statistical model given a realization $y^*$ of observations $\Y \in \R^m$ is the fundamental task of computational Bayesian inference. For many statistical models, the posterior measure $\mu(x|y^*) \propto \mu(y^*|x)\mu(x)$ for the likelihood model $\mu(y|x)$ and prior $\mu(x)$ is unknown in closed form and requires either sampling approaches, such as Markov-chain Monte Carlo (MCMC)~\citep{robert1999monte}, or variational methods to approximate the distribution~\citep{blei2017variational}. While MCMC has many consistency guarantees, it is often difficult to produce uncorrelated samples for distributions with high-dimensional parameters and multi-modal behavior.

Generative modeling is a popular framework that
avoids some of the drawbacks associated with MCMC-based sampling methods by making use of transportation of measure%
~\citep{marzouk2016introduction, ruthotto2021introduction, papamakarios2021normalizing, kobyzev2020normalizing}. Broadly speaking, this approach finds a transport map $T$ that pushes forward a reference distribution $\rho$ that is easy to sample (e.g., a standard Gaussian) to the target distribution $\mu$, which we denote as $T_\sharp \rho = \mu$. This map is often found by minimizing the KL divergence using the change-of-variables formula, a technique which first appeared in~\cite{tabak2010density}, or by minimizing Wasserstein distances as in~\cite{arjovsky2017wasserstein}. After finding a transport map $T$, one can immediately generate i.i.d.\thinspace samples in parallel from the target distribution by sampling $z^i \sim \rho$ and evaluating the map at these samples $T(z^i) \sim \mu$, thereby avoiding the use of Markov chain simulation.

In many inference problems, the likelihood model $\mu(y^*|x)$ is computational expensive or intractable to evaluate (e.g., it involves marginalization over a set of high-dimensional latent variables) or the prior density is unavailable (e.g., it is only prescribed empirically by a collection of images). In these settings, evaluating the posterior density of $X|Y = y^*$ up to a normalizing constant, and hence variational inference, is not possible. 
Instead, likelihood-free (which is also known as simulation based) inference~\citep{cranmer2020frontier} aims to sample the posterior distribution given only a collection of samples $(x^i,y^i) \sim \mu(x,y)$ from the joint distribution\footnote{Even if the likelihood and/or prior are intractable, it is often feasible to sample parameters $x^i \sim \mu(x)$ from the prior distribution and synthetic observations $y^i \sim \mu(\cdot|x^i)$ from the likelihood model.}. A broad class of generative models considered in~\citet{spantini2022coupling, kovachki2020conditional, taghvaei2022optimal} for sampling conditional distributions uses transport maps with the lower block-triangular\footnote{We can equivalently consider upper-triangular structure with a reverse ordering for $T^{\mathcal{X}}$ and $T^{\mathcal{Y}}$.} structure %
\begin{equation} \label{eq:block_triangular}
T(y,x) = \begin{bmatrix*}[l] T^\mathcal{Y}(y) \\ T^\mathcal{X}(y,x) \end{bmatrix*},
\end{equation}
where $T^{\mathcal{Y}} \colon \R^m \rightarrow \R^m$ and $T^\mathcal{X} \colon \R^{m+d} \rightarrow \R^d$. In particular, Theorem 2.4 in~\citet{kovachki2020conditional} shows that if the reference density has the product form $\rho(y,x) = \mu(y)\rho(x)$ and $T_\sharp \rho(y,x) = \mu(y,x)$, then $T^\mathcal{X}(y^*,\cdot)_\sharp \rho(x) = \mu(x|y^*)$ for $\mu(y)$-a.e.\thinspace $y^*$. %
Hence, the map $T^\mathcal{X}$ can be used to sample any conditional of the joint distribution. Moreover, we can learn maps of the form in~\eqref{eq:block_triangular} given only samples from the joint distribution~\citep{marzouk2016introduction}. 

Most approaches (see related work below) find such a transport map of the form in~\eqref{eq:block_triangular} by imposing a  parametric form for $T$ and learning its parameters by the solution of a (possibly adversarial) optimization problem~\citep{taghvaei2022optimal, bunne2022supervised}. In addition to the challenges of performing high-dimensional optimization, parametric approaches introduce intrinsic bias and result in models that can't be easily updated in an online data setting. %

\paragraph{Our contribution:} We propose a generative flow model that builds a sequence of maps to push forward the reference $\rho(y,x)$ to the joint target distribution $\mu(y,x)$. The flow results from the composition of simple elementary maps $(T_t),$ %
each with the block-triangular form in~\eqref{eq:block_triangular}. The composition of these maps defines a transformation $T = T_K \circ \dots \circ T_2 \circ T_1$, which satisfies the push-forward condition $T_\sharp \rho(y,x) = \mu(y,x)$.
In this work, we take a product reference distribution for $\rho$ as in~\citet{taghvaei2022optimal} and seek the map from $\rho(y,x) = \mu(y)\mu(x)$ to $\mu(y,x) = \mu(y)\mu(x|y)$ (where with a slight abuse of notation $\mu(y)$ and $\mu(x)$ indicate the, in principle different, marginals of $X$ and $Y$ respectively). 
As a result, the first map component can be taken as $T^{\mathcal{Y}} = \text{Id}(y)$ as it preserves the marginal distribution for the observations $\mu(y)$, while the composition of the second map components $T_t^{\mathcal{X}}(y,\cdot)$ pushes forward the prior distribution $\mu(x)$ to the conditional $\mu(x|y)$ for each $y$. As compared to parametric approaches with fixed model capacity, our algorithm iteratively improves the approximation for the map until the push-forward constraint is met. 

The remainder of this article is organized as follows. Sections~\ref{sec:background}-\ref{sec:flows} show how to learn block-triangular maps maps by maximizing a variational objective arising from an optimal transport problem with a weighted $L^2$ cost. %
We show how to find the corresponding flow by solving only minimization problems, in contrast with other conditional generative models that either solve a min-max optimization problem or require the calculation of the Jacobian matrix for the map (see examples in the next section). Section~\ref{sec:numerical_example} illustrates this flow for solving a Bayesian inference problem where the joint distribution of is defined by a $22$-dimensional parameter and observation space.

\section{Related work} 

\paragraph{Conditional generative models:}  %
Several generative approaches build maps for conditional sampling by directly seeking maps $T^{\mathcal{X}}$ parameterized by the conditioning variables $y$. These models include conditional normalizing flows~\citep{trippe2018conditional, winkler2019learning, lueckmann2019likelihood}, conditional generative adversarial networks~\citep{mirza2014conditional, adler2018deep, liu2021wasserstein}, and conditional diffusion models~\citep{batzolis2021conditional, saharia2022image}. These approaches all require a parameterization for the map, or the score function in the case of diffusion models, which construct a stochastic mapping.  %
A way to overcome a fixed parameterization was proposed in~\cite{tabak2013family} where the first modern version of normalizing flows (NF) appeared. NFs build a map from the target density to a reference density, typically a Gaussian, in a gradual way by composing many elementary maps. Rather than parameterizing the overall map at once, one deals with the more straightforward task of parameterizing simple elementary maps whose composition is supposed to reproduce the overall map. NFs were popularized in~\cite{rezende2015variational} in the context of computer vision, where the elementary maps were chosen to be a combination of relatively simple neural networks and affine transformations with tractable Jacobians in order to use likelihood-based training methods. Recently, many more choices of NFs have been proposed; see~\citet{papamakarios2021normalizing,kobyzev2020normalizing} for reviews on this topic. Among these choices are continuous-time NFs that are often parameterized using neural ODEs~\cite{grathwohl2018ffjord,onken2021ot}. Despite their name, modern NF models select a small number of maps $K = \mathcal{O}(1)$ and jointly learn the composed transformation $T_K \circ \cdots \circ T_1$, thereby making NFs similar to seeking a map with a %
a fixed parametric capacity, rather than a flow.

\paragraph{Monte Carlo methods:} A popular family of nonparametric statistical methods for sampling conditional distributions is approximate Bayesian computation (ABC) ~\citep{sisson2018handbook}. These approaches have been proposed in the setting of intractable likelihood functions. %
To bypass the evaluation of the likelihood function, ABC selects a distance function $d \colon \R^m \times \R^m \rightarrow \R_{+}$ (e.g., the $L^2$ norm) and identifies parameters $x^i$ sampled using Monte Carlo simulation, whose synthetic observations $y^i \sim \mu(\cdot|x^i)$ are close to the true observation $y^*$ up to a small tolerance $\epsilon > 0$, i.e., it rejects parameter samples $x^i$ that do not satisfy $d(y^i,y^*) < \epsilon$. While ABC can be shown to exactly sample the posterior distribution as $\epsilon \rightarrow 0$~\citep{barber2015rate}, the large distances between high-dimensional observations often results in ABC rejecting many samples and producing poor posterior approximations~\citep{nott2018high}. Given that many statistical models are often computationally expensive to simulate, this calls for strategies that don't waste any samples from the joint distribution $\mu(y,x)$. %

\paragraph{Optimal transport:} Among all maps that pushforward one measure to another, optimal transport (OT) select maps that minimize an integrated transportation cost of moving mass~\citep{villani2009optimal}. In recent years, an immense set of computational tools have been developed to find OT maps~\citep{peyre2019computational}. 
For instance,~\citet{cuturi2013sinkhorn} showed that Sinkhorn's algorithm is an efficient procedure for solving a regularized OT problem that computes transport plans between two empirical measures. The plan can be used to estimate an approximate transport map~\citep{pooladian2021entropic}. An alternative approach directly learns a continuous map that can be evaluated at any new input (that is not necessarily in the training dataset) by leveraging the analytical structure of the optimal map for the quadratic cost, which is known as the Brenier map~\citep{brenier1991polar}. In particular,~\citet{makkuva2020optimal} parameterized the map $T$ as the gradient of an input convex neural networks~\citep{amos2017input}. 
The Brenier map transports the samples in a single step and can be found by solving an adversarial optimization problem given only samples of the reference and target measures. The latter approach was extended in~\citet{taghvaei2022optimal} for conditional sampling by imposing the block-triangular structure in~\eqref{eq:block_triangular} on $T$, thereby finding the conditional Brenier map~\cite{carlier2016vector}. The requirement to solve challenging min-max problems in these approaches, however, has inspired alternative methods to find the (conditional) Brenier map that are more stable in high dimensions~\citep{uscidda2023monge}. In this work, we propose a flow-based approach based on OT that only requires the solution of minimization problems, such as those appearing in conditional normalizing flows.

\section{Background on optimal transport} \label{sec:background}

Given two measures $\rho,\mu$\footnote{For ease of exposition we treat these measures as having densities on $\R^n$, but can relax this assumption.} on the same dimensional space $\R^n$, the Monge problem seeks a map $T \colon \R^n \rightarrow \R^n$ that satisfies $T_\sharp \rho = \mu$ and minimizes an integrated transportation cost given in terms of $c \colon \R^{n} \times \R^{n} \rightarrow \R_{+}$. Here we will only consider strictly convex cost functions $c$,  %
such as the quadratic cost $c(z,z') = \frac{1}{2}\|z - z'\|^2$. Then, the optimal transport map is the solution to the Monge problem
\begin{equation} \label{eq:MongeProblem}
    \min_T \left\{\int c(z, T(z))\rho(z)\dz : T_\sharp \rho = \mu \right\},
\end{equation}
over all measurable functions with respect to $\rho$.
To consider measures for which problem~\eqref{eq:MongeProblem} does not admit a solution, it is common to work with a relaxation known as the Kantorovich problem that seeks a coupling, or transport plan, $\gamma \colon \R^n \times \R^n \rightarrow \R_{+}$ with marginals $\rho$ and $\mu$. This relaxation looks for a plan that solves 
\begin{equation}\label{eq:Kant}
\min_{\gamma \in \Pi(\rho,\mu)} \int c(z,z')\gamma(z,z')\dz\dz',
\end{equation}
where $\Pi(\rho,\mu)$ denotes all joint probability distributions that satisfy the constraints
 $
\int \gamma(z,z')\dz'=\rho(z),$ and $\int \gamma(z,z')\dz=\mu(z').$ Problem (\ref{eq:Kant}) is the continuous equivalent of a linear programming problem and, as such, it admits a dual formulation that is useful for our purpose. The dual problem consists of solving the maximization problem %
$$\max_{\varphi,\psi} \int \varphi(z)\rho(z)\dz + \int \psi(z')\mu(z')\dz',$$ 
among potential functions $\varphi \colon \R^n \rightarrow \R^n$ and $\psi \colon \R^n \rightarrow \R^n$ satisfying the constraint $\varphi(z) + \psi(z') \leq c(z,z')$ for all $z,z'$. It can be shown that the solution $(\varphi,\psi)$ of the above problem is given by the conjugate pair %
\begin{align*}
    \varphi(z) &= \psi^c(z) \coloneqq \min_{z'}\{c(z',z)-\psi(z')\} \\ %
    \psi(z') &= \varphi^c(z') \coloneqq \min_{z}\{c(z',z)-\varphi(z)\} %
\end{align*}
where $f^c$ denotes the $c$-transform of $f$. One of the most important results of the dual Kantorovich problem is that, for sufficiently smooth $\rho$ and $\mu$, the solution of the dual problem is equivalent to the solution of the Monge optimal transport problem; in other words, when $\rho$ and $\mu$ are sufficiently regular, the optimal plan $\gamma$ is induced by a one-to-one map $T$. Moreover, one can recover the optimal transport map solving~\eqref{eq:MongeProblem} from the solution of the dual problem for any cost function of the form $c(z,z') = h(z - z')$ with $h$ strictly convex as 
\begin{equation} \label{eq:OptimalMap}
T(z)=z - (\nabla h)^{-1}\nabla \varphi(z).
\end{equation} We refer the reader to~\cite[Chapter 1.3]{santambrogio2015optimal} and~\cite[Chapter 2]{figalli2021invitation} for more details on the solution of the dual formulation for general costs. 

Inspired by the form of the optimizer,~\citet{chartrand2009gradient, gangbo1994elementary} showed that the optimal potentials (and thus the optimal map by~\eqref{eq:OptimalMap}) can be directly computed by maximizing the objective functional %
\begin{equation} \label{eq:dual_objective}
    \mathcal{J}(\varphi) = \int \varphi(z)\rho(z)\dz + \int \varphi^{c}(z')\mu(z')\dz'.
\end{equation}
Moreover, the authors showed that the first variation, i.e., the functional derivative, of the %
objective $\mathcal{J}$ for the quadratic cost $h(z) = \frac{1}{2}\|z\|^2$ at $\varphi_t$ can be explicitly computed as $\textstyle \left. \frac{\delta \mathcal{J}}{\delta \varphi}\right|_{\varphi_t} = \rho(z) - \mu(\nabla \varphi_t^{**})\det \nabla^2 \varphi_t^{**}$, where $\varphi_t^{*}$ denotes the convex conjugate of $\varphi_t$. This suggests that a natural way to solve~\eqref{eq:dual_objective} is via the gradient ascent iterations
\begin{equation}\label{eq:compli}
\varphi_{t+1}(z)=\varphi_{t}(z) + \alpha \left. \frac{\delta \mathcal{J}}{\delta \varphi}\right|_{\varphi_{t}}, %
\end{equation}
where $\alpha > 0$ denotes a step-size parameter. %
Applying this iteration in practice, however, requires the functional form of the source and target densities as well as evaluating convex conjugates via the solution of separate optimization problems. The next section constructs a flow for which we can more easily evaluate the functional derivatives of the objective functional. %

\section{Conditional transport via data-driven flows} \label{sec:flows}

Given that the optimal map is the gradient of the optimal potential $\varphi$, one way to look at the gradient ascent iteration for the potentials is to take the gradient with respect to $z$ on both sides of~\eqref{eq:compli} in order to obtain the discrete-time evolution equation %
\begin{equation}\label{eq:zdynamic}
z_{t+1} = z_{t} - \alpha (\nabla h)^{-1} \nabla_{z}\left. \frac{\delta \mathcal{J}}{\delta \varphi}\right|_{\varphi_{t}}, %
\end{equation}
starting from the identity map $z_{0}=z,$ or equivalently $\varphi_0(z) = \|z\|^2/2$ for the quadratic cost. In the limit of $t\rightarrow \infty$ and $\alpha \rightarrow 0$, the evolution in~\eqref{eq:zdynamic} defines a map $z_{\infty}(z)$ pushing forward $\rho$ to $\mu$. The challenge of considering this dynamic for $\varphi$ is that computing the functional derivative is not straightforward due to the presence of convex conjugates in the definition for $\mathcal{J}$, as in~\eqref{eq:compli}. %

A crucial observation made in~\citet{trigila2016data} shows that one can  substitute~\eqref{eq:zdynamic} with
\begin{equation}\label{eq:simpl}
z_{t+1}=z_{t} - \alpha (\nabla h)^{-1} \nabla_{z} \left. \frac{\delta \mathcal{J}_{t}}{\delta \varphi}\right|_{\varphi = \text{const.}}%
\end{equation}
in terms of the time-dependent functional 
\begin{equation} \label{eq:time_dep_functional}
    \mathcal{J}_{t}(\varphi)=\int \varphi(z)\rho_{t}(z)\dz + \int \varphi^{c}(z')\mu(z)\dz',
\end{equation}
where $\rho_{t}$ is defined as the pushforward of $\rho$ under the map $z_{t}(z)$. In this case, the functional derivative evaluated at a constant potential $\varphi$, that without loss of generality we take to be zero, %
was shown in~\citet{trigila2016data} to be %
$
\left. \frac{\delta \mathcal{J}_t}{\delta \varphi}\right|_{\varphi=0}=\rho_t(z)-\mu(z).
$ This computation avoids the use of convex conjugates as is in Section~\ref{sec:background}. A parametric approximation of this functional derivative will be presented in Section~\ref{sec:parametric_approx}.  %

With $\alpha \rightarrow 0$, the iterations in~\eqref{eq:simpl} define a continuous-time flow gradually mapping $\rho$ into $\mu$. %
The flow evolves according to the dynamic %
$\dot{z} = -(\nabla h)^{-1} \nabla_z (\rho_t(z) - \mu(z))$, and the corresponding probability density function $\rho_t$ for $z_t$ satisfies the continuity equation $\frac{\partial \rho_t}{\partial t} + \text{div}(\rho_t \dot{z}) = 0$. Section 5 in~\cite{trigila2016data} shows that for strictly convex cost functions, the squared $L^2$ norm between $\rho_t$ and $\mu$ is strictly decreasing, which shows that $\rho_t \rightarrow \mu$ in $L^2$ as $t \rightarrow \infty$. An important direction of future work is to establish convergence of the flow under different metrics, and moreover to determine its rates of convergence in relation to properties of the target measure. 

\subsection{Block-triangular maps}
With the quadratic cost $c$, the flow in~\eqref{eq:simpl} does not yield maps with the block-triangular structure in~\eqref{eq:block_triangular} whose blocks can be used for %
conditional sampling. 
To find a block-triangular transport map for $z = (y,x)$, one can use a cost function that heavily penalizes mass movements in the $y$ variable while making almost free movements in the $x$ variable. An example is
\begin{equation} \label{eq:block_cost}
c_\lambda(z,z') = \frac{1}{2}(\lambda\|y - y'\|^2 + \|x - x'\|^2),
\end{equation}
with large positive $\lambda$. %
In this case, the optimal map in~\eqref{eq:OptimalMap} has the form
\begin{equation} \label{eq:map}
T(z)=z - \nabla_{\lambda} \varphi(z),
\end{equation}
where we define the re-scaled gradient associated with the cost $c_{\lambda}$ as $\nabla_{\lambda}\varphi(z) =(\partial_{y}\varphi(z)/\lambda,\partial_{x}\varphi(z))$. %
Hence, when $\lambda \rightarrow \infty$ the map in~\eqref{eq:map} converges to a block-triangular map of the form in~\eqref{eq:block_triangular} with $T^{\mathcal{Y}}(y) = \text{Id}(y)$ and $T^{\mathcal{X}}(y,x) = x + \partial_x \varphi(y,x)$. 

\begin{remark} \label{rem:conditional_Brenier}
    The optimal transport map $T$ pushing forward $\rho(x)$ to $\mu(x|y)$ with minimal quadratic cost $\int \|x - T(y,x)\|^2 \rho(y,x) \textrm{d} x \textrm{d} y$ for transporting the $x$ variable in expectation over $y \sim \mu(y)$ was coined in~\citet{carlier2016vector} as the conditional Brenier map. Theorem 2.3 in~\citet{carlier2016vector} shows that this map is monotone and unique among all functions written as the gradient of a convex potential with respect to the input $x$. %
\end{remark}

\begin{remark} The cost function in~\eqref{eq:block_cost} is related to the weighted $L^2$ cost function $\sum_{i=1}^n \lambda_i(\varepsilon) |z_i - z_i'|^2$ for $\lambda_i(\varepsilon) > 0$. For weights satisfying $\lambda_{i+1}(\varepsilon)/\lambda_i(\varepsilon) \rightarrow 0$ as $\varepsilon \rightarrow 0$ for all $i \in \{1,\dots,d-1\}$, \citet{carlier2010knothe} showed that the optimal transport map with respect to this weighted cost converges to the strictly lower-triangular transport map known as the Knothe-Rosenblatt (KR) rearrangement~\citep{knothe1957contributions, rosenblatt1952remarks}. The KR map is uniquely defined given a variable ordering. For the purpose of conditional sampling, it is sufficient to consider block-triangular, rather than triangular, maps, as described in Section~\ref{sec:introduction}. The drawback is that the larger space of block-triangular maps $T$ admits more transformations satisfying the push-forward condition $T_\sharp \rho = \mu$. The non-uniqueness can be resolved, however, by the regularization from the transport cost; see Remark~\ref{rem:conditional_Brenier}. %
\end{remark}

As in the previous section, we  now derive a flow where each elementary map has a block-triangular structure of the form in~\eqref{eq:block_triangular}. By keeping in mind the connection between the map $z_{t+1}(z)$ and the rescaled gradient of a corresponding potential $\varphi_{t+1}$, one can construct a flow of the form in~\eqref{eq:simpl} as $z_{t+1} = z_t - \alpha \nabla_\lambda (\rho_t(z_t) - \mu(z_t))$. %
Each elementary map %
is %
the sum of the identity function 
and a perturbation given by the rescaled gradient of the maximum ascent direction for $\mathcal{J}_t$ at $\rho_t$. Moreover, the functional derivative of $\mathcal{J}_t$ can be computed entirely from the current reference measure $\rho_t$ and the target $\mu$. %

Each elementary map pushes forward $\rho_{t}$ to $\rho_{t+1}$, which approaches $\mu$ as $t \rightarrow \infty$. Moreover, the composition of block-triangular maps $T_t$ (corresponding to $\lambda \rightarrow \infty$) at each step $t$ yields an overall map $T = T_K \circ \cdots \circ T_1$ that is also block-triangular and can be used to sample the conditional distributions $X | Y = y$ for any observation $y \in \R^m$, as will be described in the following section. The next section shows how to compute the functional gradient and the resulting potential at each step given only samples from the reference and target measure. 

\subsection{Gradient approximation from samples} \label{sec:parametric_approx} %

In this work, we follow~\citet{trigila2016data} and approximate the gradient in the span of a small set of features $F_j \colon \R^n \rightarrow \R$ where $n =d+m$ with coefficients $\beta_j \in \R$, i.e., 
\begin{equation}\label{eq:param}
\left. \frac{\delta \mathcal{J}_{t}}{\delta \varphi}\right|_{\varphi(z)=0} \approx \sum_j \beta_j F_j(z).
\end{equation}
The features can include local radial basis functions, polynomials, or possibly neural networks; see~\citet{gu2022lipschitz} for an example in the context of training generative adversarial networks. %
In this work, we chose $F_{j}$ to be radial basis functions centered around a subset of randomly selected points. More details on the parameterization and our algorithm for selecting the centers is provided in Appendix~\ref{sec:parameterization}.  

The approximation in (\ref{eq:param}) corresponds to the parameterization of the potential  
$
\varphi(z)\approx \varphi_{\beta}(z)= \sum_{j}\beta_{j}F_{j}(z).
$
Given a rich expansion for $\varphi$, one can hope to approximate the functional derivative sufficiently well. Nevertheless, a core advantage of the flow is that the elementary map at each step does not need to learn the full map pushing forward $\rho_t$ to $\mu$.

In an empirical setting, our goal is to estimate the potential functions $\varphi_\beta$ at  step $t$ and derive the corresponding flow given only i.i.d.\thinspace samples $\{z_t^i\}_{i=1}^N \sim \rho_t$ and $\{(z')^i\}_{i=1}^M \sim \mu$. In practice, samples from the initial product reference $\rho_0(y,x) = \mu(y)\mu(x)$ can be generated by creating a tensor product set of the joint samples from $\mu(y,x)$.  %
We use the samples to define a Monte Carlo approximation of the objective functional in~\eqref{eq:time_dep_functional}, which is linear with respect to each measure. 
That is, 
\begin{equation} \label{eq:emp_objective}
    \widehat{\mathcal{J}}_t(\varphi_\beta) = \frac{1}{N} \sum_{i=1}^{N} \varphi_\beta(z_{t}^{i}) + \frac{1}{M} \sum_{i=1}^{M} \varphi_\beta^{c}((z')^{i}).
\end{equation}

In this work, we %
select the values of the coefficients $\beta_{j}$ that maximize the objective in~\eqref{eq:emp_objective} via gradient ascent. Instead of selecting a step-size, however, we choose the coefficients according to the one-step Newton scheme $\beta^* = (\nabla_\beta^2 \widehat{\mathcal{J}}_t)^{-1} \nabla_\beta \widehat{\mathcal{J}}_t$, where the first and second derivatives are computed at $\beta = 0$. The Newton method better captures the local curvature of $\mathcal{J}_t$ around the identity map and yields faster convergence of the flow towards $\mu$.
Appendix~\ref{sec:GradientHessian} shows that the gradient and Hessian of~\eqref{eq:emp_objective} with respect to $\beta$ can be easily computed as
\begin{align*}
\left. \nabla_{\beta_j} \widehat{\mathcal{J}}_t \right|_{\beta = 0} &=\frac{1}{N}\sum_{i=1}^{N}F_j(z_t^{i})- \frac{1}{M}\sum_{i=1}^{M}F_j((z')^{i}), \\
\left. \nabla^2_{\beta_j,\beta_k} \widehat{\mathcal{J}}_t  \right|_{\beta = 0} &=-\frac{1}{M}\sum_{i=1}^M \frac{1}{2}\langle \nabla F_j((z')^{i}) ,\nabla_{\lambda} F_k((z')^{i}) \rangle.
\end{align*}

\begin{remark} The expressions for the derivatives of $\widehat{\mathcal{J}}_t$ with respect to $\beta$ are relatively simple because they are computed around $\beta = 0$. This is the analogous to having a closed-form expression for the first variation of $\mathcal{J}_t$ around $\varphi = 0$ as in Section 4.1.
\end{remark}

Once the optimal coefficients $\beta^*$ have been computed, we take the rescaled gradient of the potential $\varphi_{\beta^*}$ to obtain a discrete-time update for the parameterized version of equation~\eqref{eq:simpl}. Each update defines ones elementary map $T_t$ with the block-triangular structure in~\eqref{eq:block_triangular}. That is,
\begin{equation} \label{eq:final_update}
z_{t+1} = T_t(z_t) \coloneqq z_{t} - \sum_{j}\beta^*_{j}\nabla_{\lambda}F_{j}(z_t). 
\end{equation} %

We propose to update the samples until their values are no longer changing; more specifically, our termination criteria compares the update to all sample points $\{z_t\}_{i=1}^N$ in the evolution equation~\eqref{eq:final_update} %
with a threshold value $\epsilon=10^{-6}$. When the samples stop moving they are approximately equal in distribution to the target samples from $\mu$. The  composition of the resulting elementary maps in~\eqref{eq:final_update} define a generative flow model pushing forward $\rho$ to $\mu$. Our complete procedure for learning the flow is provided in Algorithm~\ref{alg:mf}.

\begin{algorithm}[hbt!]
\caption{Generative flow model for conditional sampling}\label{alg:mf}
\begin{algorithmic}[1]
{\normalsize \STATE {\bfseries Input}: Joint samples $\{(y^i,x^i)\}_{i=1}^N \sim \mu(y,x)$, features $(F_j)$, termination threshold $\epsilon$
\STATE Split dataset to create reference $\rho(y,x) = \mu(y)\mu(x)$ and target $\mu(y,x)$ samples
\STATE Set $t = 0$ and $z_t^i = (y_t^i,x_t^i)$ to reference samples
\WHILE{samples are still moving: $\|z_{t+1}^i - z_t^i\| > \epsilon$ for any $i$}
\STATE Find coefficients $\beta^*$ using one-step of Newton’s method
\STATE Move points using the map in~\eqref{eq:final_update} %
\STATE Increment counter $t \leftarrow t+1$
\ENDWHILE}
\end{algorithmic}
\end{algorithm}

The resulting flow defines an overall block-triangular map that can be used to sample any conditional of the target measure. By preserving the block-triangular structure in each component, the composed map after running $K$ steps of Algorithm~\ref{eq:block_triangular} has the block-triangular form in~\eqref{eq:block_triangular} where the second component is given by $T^{\mathcal{X}}(y^*,x) \coloneqq T_K^{\mathcal{X}}(y^*,\cdot) \circ \cdots \circ T_{1}^{\mathcal{X}}(y^*,x)$ for any conditioning variable $y^*$. Theorem 2.4 in~\citet{kovachki2020conditional} shows that the map $x \mapsto T^{\mathcal{X}}(y^*,x)$ pushes forward $\mu(x)$ to $\mu(x|y^*)$. Thus, we can sample any conditional measure after learning the flow by pushing forward (new) prior samples through the composed map with a fixed argument for the $y$ variable. %

While the maps found using Algorithm~\ref{alg:mf} can be used to sample any conditional distribution, we suggest adapting the flow when one is interested in the conditional  distribution corresponding to one realization of the conditioning variable $y^*$. In particular, we can include markers $\{(x^i,y^*)\}_{i=1}^N$ with $x^i \sim \mu(x)$ in the set of reference $\rho$ samples. The push-forward of these additional samples immediately provides samples from the desired conditional distribution $\mu(x|y^*)$. Moreover,
we can select local features $F_j$ centered near $y^*$ in the algorithm. This ensures the flow characterizes finer details of the map in this region of the input space. %
This is analogous to conditional normalizing flow (NF) models that sequentially retrain with $x^i$ samples drawn from an approximation to $\mu(x|y^*)$, instead of the prior $\mu(x)$, in order to more accurately model a specific posterior~\citep{greenberg2019automatic}. Unlike the latter approaches, however, our maps can still be applied to other conditioning variables $y^*$.

We conclude this section by presenting a few core advantages of our flow-based algorithm. First, the algorithm does not require an arbitrary \textit{a-priori} selection of the number of elementary maps (i.e., layers) in the flow, as compared to modern NFs~\citep{papamakarios2021normalizing}. Instead, the algorithm proceeds until the difference between the reference and target measures is small according to the selected features, which can be chosen adaptively at each step. Second, the algorithm only uses simple minimization steps with respect to the parameters as compared to approaches that solve min-max problems. In fact, the complexity of each iteration is at most $\mathcal{O}(Mdp^2 + p^3)$ to update the coefficients, where $p$ is the number of features, and $\mathcal{O}(Ndp)$ to move the sample points. Third, we don't need to evaluate the push-forward density through the map $T$ for training, unlike approaches that use the change-of-variables formula to maximize the likelihood of the data. Hence, the algorithm does not require evaluating the Jacobian matrix of $T$ or require specific parameterizations that guarantee $T$ is invertible and $\det \nabla T$ is tractable to evaluate, as in~\citet{wehenkel2019unconstrained, baptista2020representation}. Lastly, we don't require the functional form of the reference density, which allows one to construct maps that push-forward a general (possibly non-Gaussian) prior measure. %

\section{Numerical examples}  \label{sec:numerical_example}

In this section, we will %
illustrate the flow on a small two-dimensional example in Section~\ref{sec:banana}, and a Bayesian inference problem where we infer four parameters of the Lotka--Volterra nonlinear ODE model in Section~\ref{sec:LotkaVolterra}. 

\subsection{2D banana distribution} \label{sec:banana}
Here, we let the parameter be a standard Gaussian random variable $X\sim \mathcal{N}(0,1)$ and the measurement be $Y=0.5X^{2}-1 + \epsilon$ with $\epsilon\sim \mathcal{N}(0,1)$. The left panel of Figure \ref{fig:SamplesMoon} shows the (un-normalized) joint density $\mu(x,y)$ while the middle panel shows samples from $\mu(x,y)$, in red, and the product reference $\mu(x)\mu(y)$ in blue. In this example we parameterize the elementary maps as perturbations of the identity map where the perturbation is given by the linear combinations of ten radial basis functions with centers chosen at random from $\mu(x)\mu(y)$. The code to reproduce the results for this example is provided at \url{github.com/GiulioTrigila/MoonExample}.

\begin{figure}[!ht]
    \centering
    \includegraphics[trim={1cm 0.5cm 1cm 1cm},clip,width=0.32\textwidth]{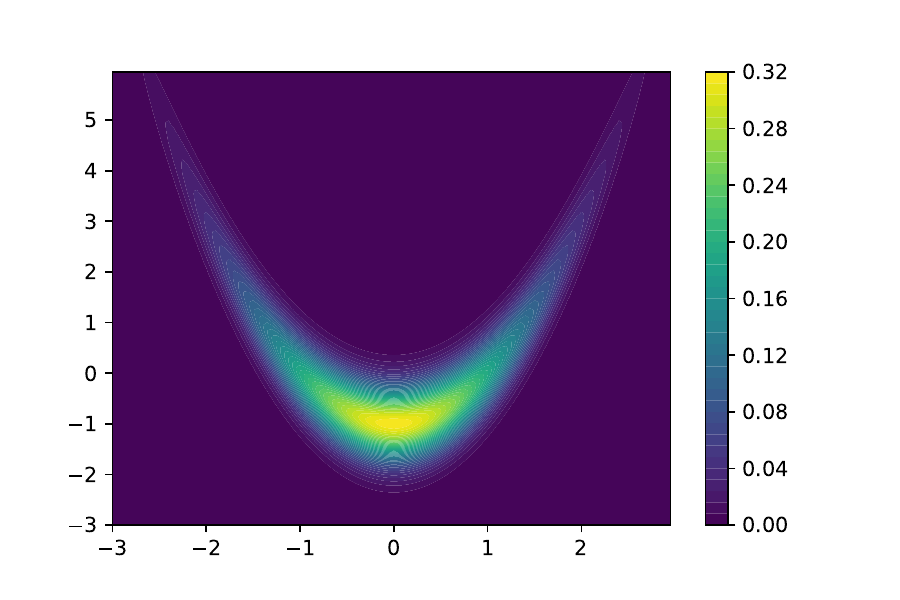}
    \includegraphics[trim={1cm 0.5cm 1cm 1cm},clip,width=0.32\textwidth]{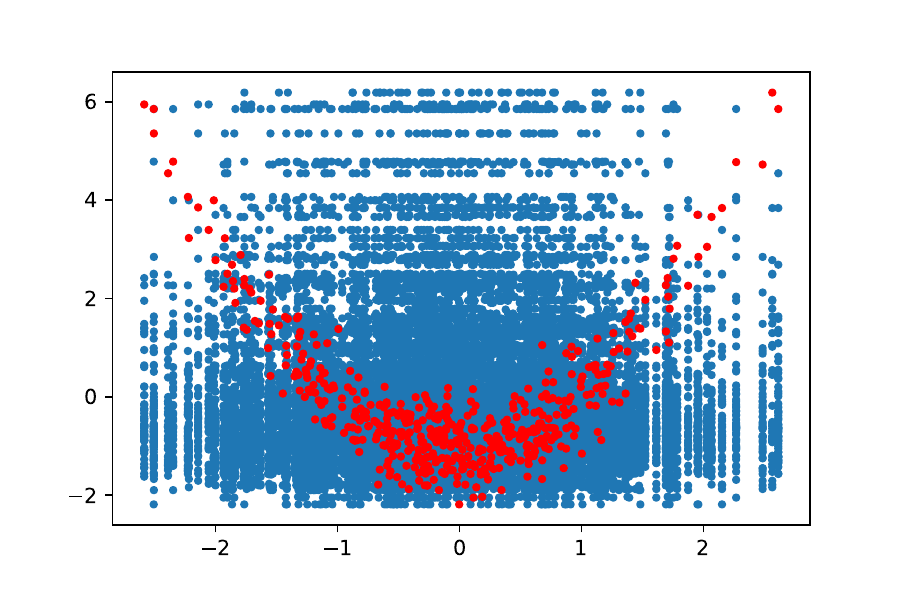}
    \includegraphics[trim={1cm 0.7cm 1cm 1.5cm},clip,width=0.28\textwidth]{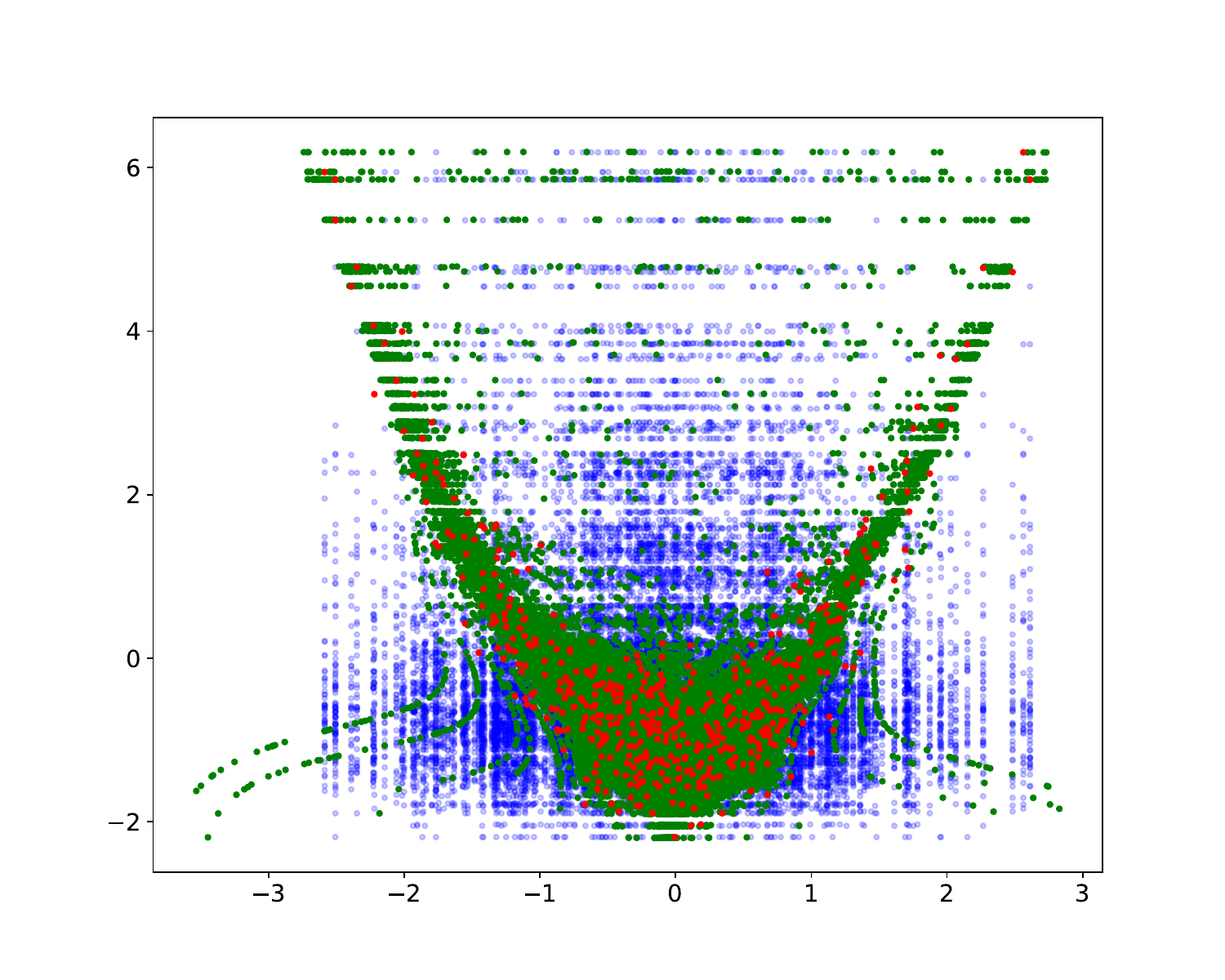}    
    \vspace{-0.2cm}
    \caption{Left: Contours of the joint density $\mu(x,y)$. Middle: $500$ i.i.d.\thinspace samples from $\mu(x,y)$ in red and $10^4$ samples from $\mu(x)\mu(y)$. Right: Samples from $\mu(x,y)$ in red, from $\mu(x)\mu(y)$ in blue, and from the pushforward of $\mu(x)\mu(y)$ through the flow in green. As expected, the pushforward samples overlap with the joint samples.} %
    \label{fig:SamplesMoon}
\end{figure}

The right panel of Figure~\ref{fig:SamplesMoon} plots the samples generated by pushing forward the product reference samples through the composed map $T$. At the end of the algorithm, the push-forward condition $T_\sharp \rho = \mu$ should be satisfied. As a result, the push-forward should match the joint samples, as seen with the close match between the green and red samples. By Theorem 2.4 in~\cite{kovachki2020conditional}, we can then use these maps to sample the conditional distribution $\mu(x|y^\ast)$ for any $y^*$. Figure~\ref{fig:CDE} plots the approximate density (using a kernel density estimator) of the push-forward samples $T(y^*,x^i)$ with $x^i \sim \mu(x)$ for the conditioning variable $y^* = 2$. In comparison to the conditionals of a kernel density estimator of the joint samples from $\mu(y,x)$ or $T_\sharp \rho(y,x)$, we observe close agreement with the true multi-modal conditional density for $\mu(x|y^*)$. We note that for invertible  mappings $T$, the change-of-variables formula $\rho_T(x|y^\ast) \coloneqq \rho(T^{-1}(y^\ast,x))|\nabla T^{-1}(y^\ast,x)|$ can also be used to approximate the conditional density after learning the flow.  %

\begin{figure}[!ht]
    \centering
    \includegraphics[trim={3cm 1.5cm 3cm 1.5cm}, clip, width=0.9\textwidth]{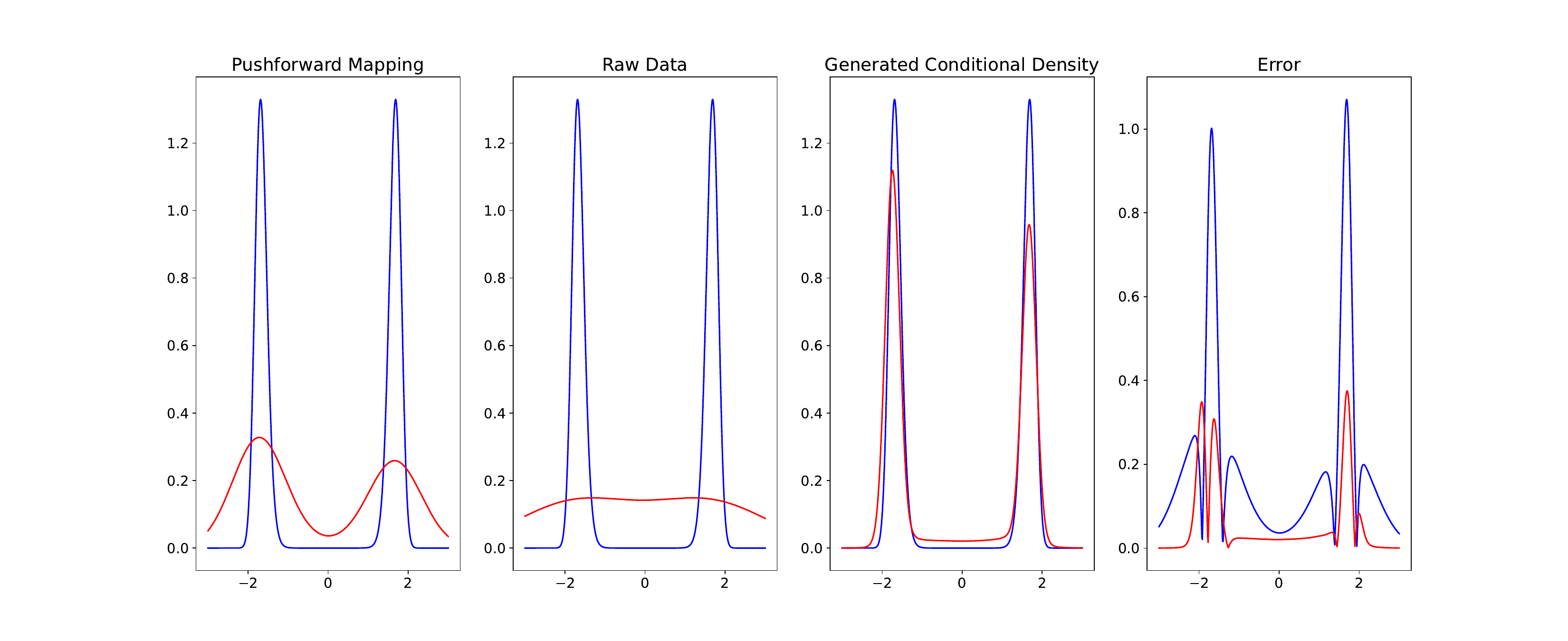}    
    \vspace{-0.2cm}
    \caption{The blue line in all panels represents the true conditional density for $\mu(x|y^*=2)$. The red line in the first (left) panel displays a Nadaraya-Watson \citep{nadaraya1964estimating,watson1964smooth} conditional kernel density estimator (CKDE) of $\mu(x|y^*=2)$ using the samples obtained from the pushforward of $\mu(x)\mu(y)$ (i.e., the green points in Figure~\ref{fig:SamplesMoon}). Similarly, the second panel displays the CKDE of $\mu(x|y^*=2)$ using samples from the joint distribution (i.e., the red points in Figure~\ref{fig:SamplesMoon}). This is similar to a posterior approximation obtained using an Approximate Bayesian Computation (ABC) rejection-sampling algorithm. The third panel displays the (CKDE) of $\mu(x|y^*=2)$ obtained using $10^{4}$ samples generated from $\mu(x)$ and mapped to $\mu(x|y^*=2)$ via the learned %
    flow. The fourth (right) panel compares the error between the true conditional density and the CKDE of the first panel in blue and the CKDE of the third panel in red (i.e., the method proposed in this work). We observe that the flow provides more accurate conditional approximations than ABC given the same joint samples.}
    \label{fig:CDE}
\end{figure}

\subsection{Lotka--Volterra dynamical system} \label{sec:LotkaVolterra}
In this section we apply Algorithm \ref{alg:mf} to estimate static parameters in the Lotka--Volterra population model given noisy realizations of the states over time. The model describes the populations $p = (p_1,p_2) \in \R_{+}^2$ of prey and predator species, respectively. The populations $p(t)$ for times $t \in [0,T]$ solve the nonlinear coupled ODEs
\begin{equation} \label{eq:LV_ODE}
\frac{dp_{1}(t)}{dt}=\alpha p_{1}(t) - \beta p_{1}(t)p_2(t), \qquad 
\frac{dp_{2}(t)}{dt}=-\gamma p_{2}(t) - \delta p_{1}(t)p_{2}(t),
\end{equation}
with the initial conditional $p(0) = (30,1)$, where $X = (\alpha, \beta, \gamma, \delta) \in \R^4$ 
are unknown parameters. 
The parameters are initially distributed according to a log-normal prior distribution given by $\log(X)\sim\mathcal{N}(\mu,0.5I_4)$ with $\mu=(-0.125,-0.125,-3,-3)$. 
We simulate the ODE for $T=20$ time units and observe the state values every $\Delta t_{\text{obs}}=2$ time units with 
independent and additive log-normal noise, i.e., $\log(Y_k)\sim\mathcal{N}(p(k\Delta t_{\text{obs}}),\sigma^{2}I_{2})$ for $k=1,\dots,9$ with $\sigma^2=0.1$.
Figure~\ref{fig:LVobser} presents the two states $p(t)$ in solid lines for the parameter $x^{\ast}=(0.83, 0.041, 1.08 , 0.04)$ %
and an observation $y^{\ast}\in\R^{18}$ drawn from the likelihood model $\mu(\cdot|x^*)$ in circles. %
The main reason for choosing this model to test the procedure described in Section~\ref{sec:flows} is that the likelihood model is known in closed form and thus the results can be compared to MCMC sampling procedures, the gold standard for Bayesian inference methods. In this experiment we learned the flow using only $M = 1000$ samples from the joint distribution.%

\begin{figure}[!ht]
    \vspace{-0.5em}
    \centering
    \includegraphics[trim={1cm 0.5cm 1cm 0.5cm},clip,width=0.38\textwidth]{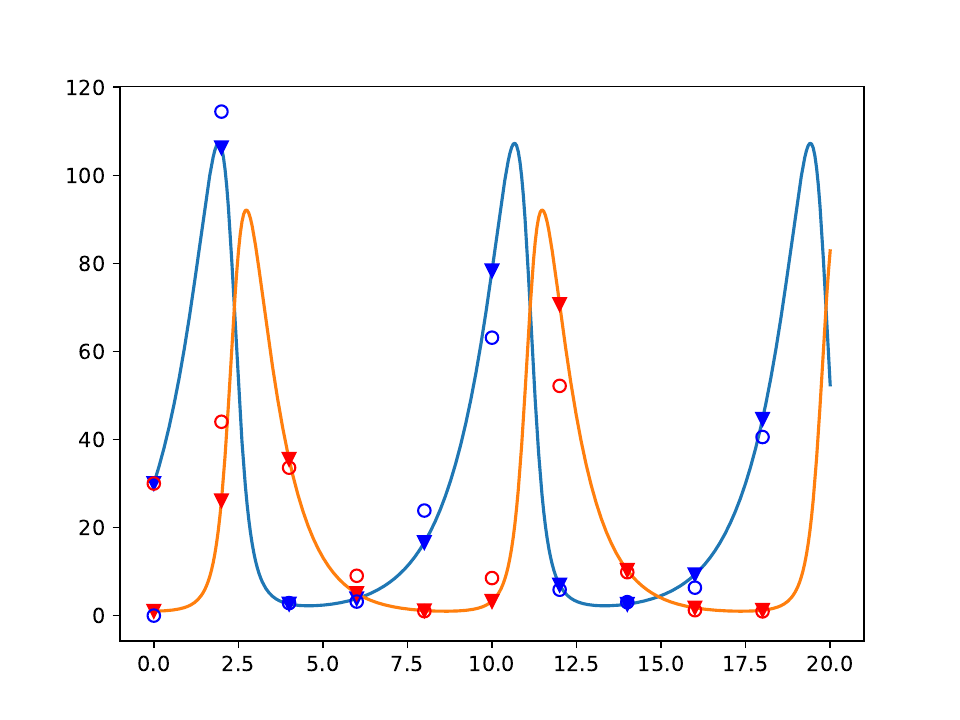}
    \vspace{-0.8em}
    \caption{Populations of the two species ($y$-axis, in blue and orange) as a function of time for the parameters $x^{*}$. The triangles are observations from the true trajectory and the circles are the observations $y^*$ with additive noise.}
    \label{fig:LVobser}
\end{figure}

Figure~\ref{fig:LVPost} compares one and two-dimensional projections of the approximate posterior distribution for the parameters obtained with our procedure (left) and with MCMC (on the right). The red vertical line represents the exact value of the parameters $x^\ast$ used to generate the trajectory in Figure~\ref{fig:LVobser}. %
For each approximation, we obtain the 30 most significant posterior samples, which are closest to the empirical posterior mean in the Euclidean norm. For each parameter, we find the corresponding state trajectories by solving the ODEs in~\eqref{eq:LV_ODE}. The states obtained with the flow and with MCMC are compared in the left and right of Figure~\ref{fig:trajectories} respectively.
The posterior predictive states give a visual representation of the uncertainty arising from estimating the true parameter $x^{\ast}$ given noisy observations. 
As expected, the MCMC method displays lower uncertainty in the trajectories due to its use of the exact likelihood model. This is particularly noticeable for larger values of the populations $p_1,p_2$ where the effect of the noise on the sampled data (i.e., the difference between the circle and triangle markers in Figure~\ref{fig:LVobser}) has a larger effect than during time intervals where $p_{1}$ and $p_{2}$ are nearly constant. 

\begin{figure}[!ht]
    \centering
    \includegraphics[width=0.45\textwidth]{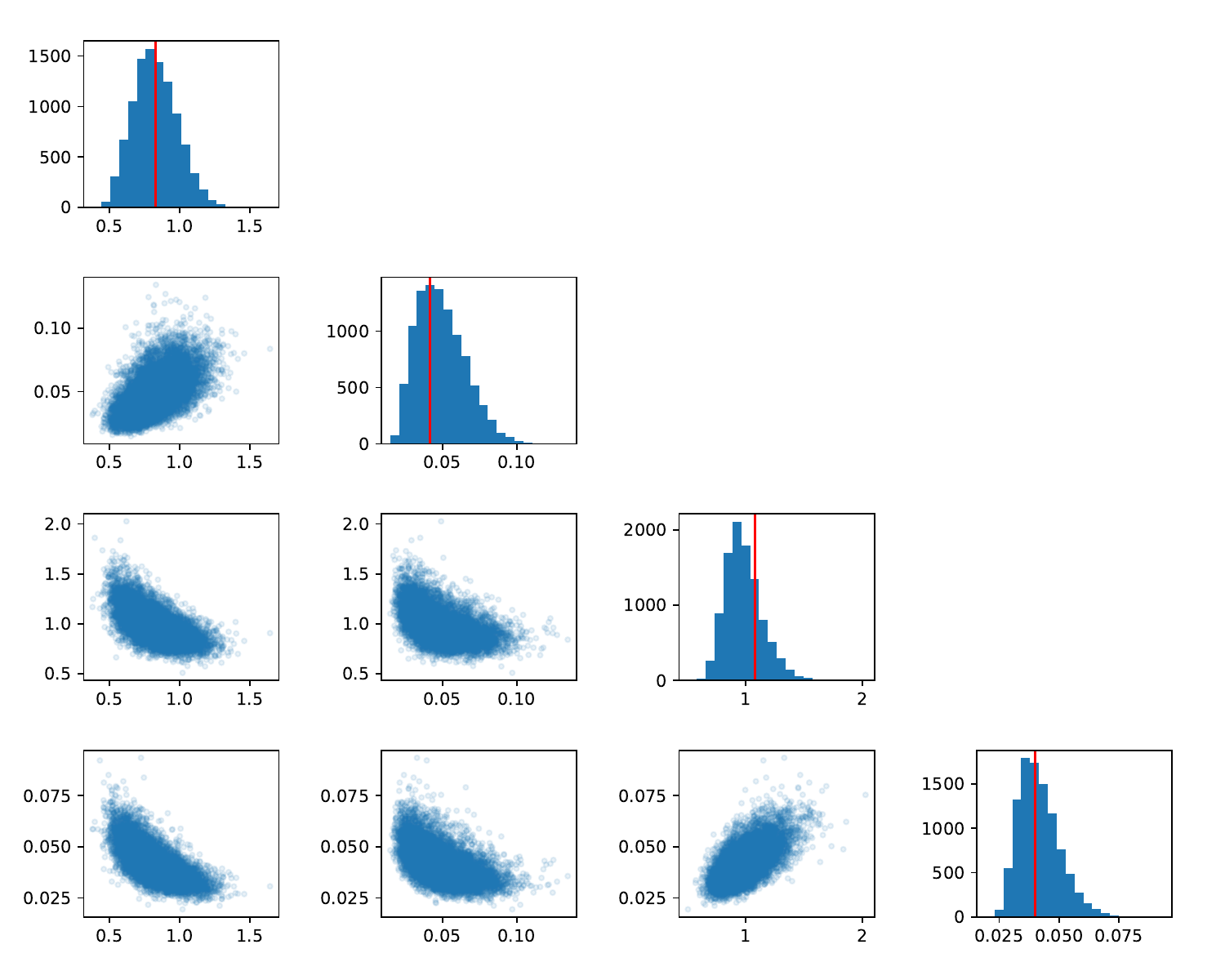}
    \includegraphics[width=0.45\textwidth]{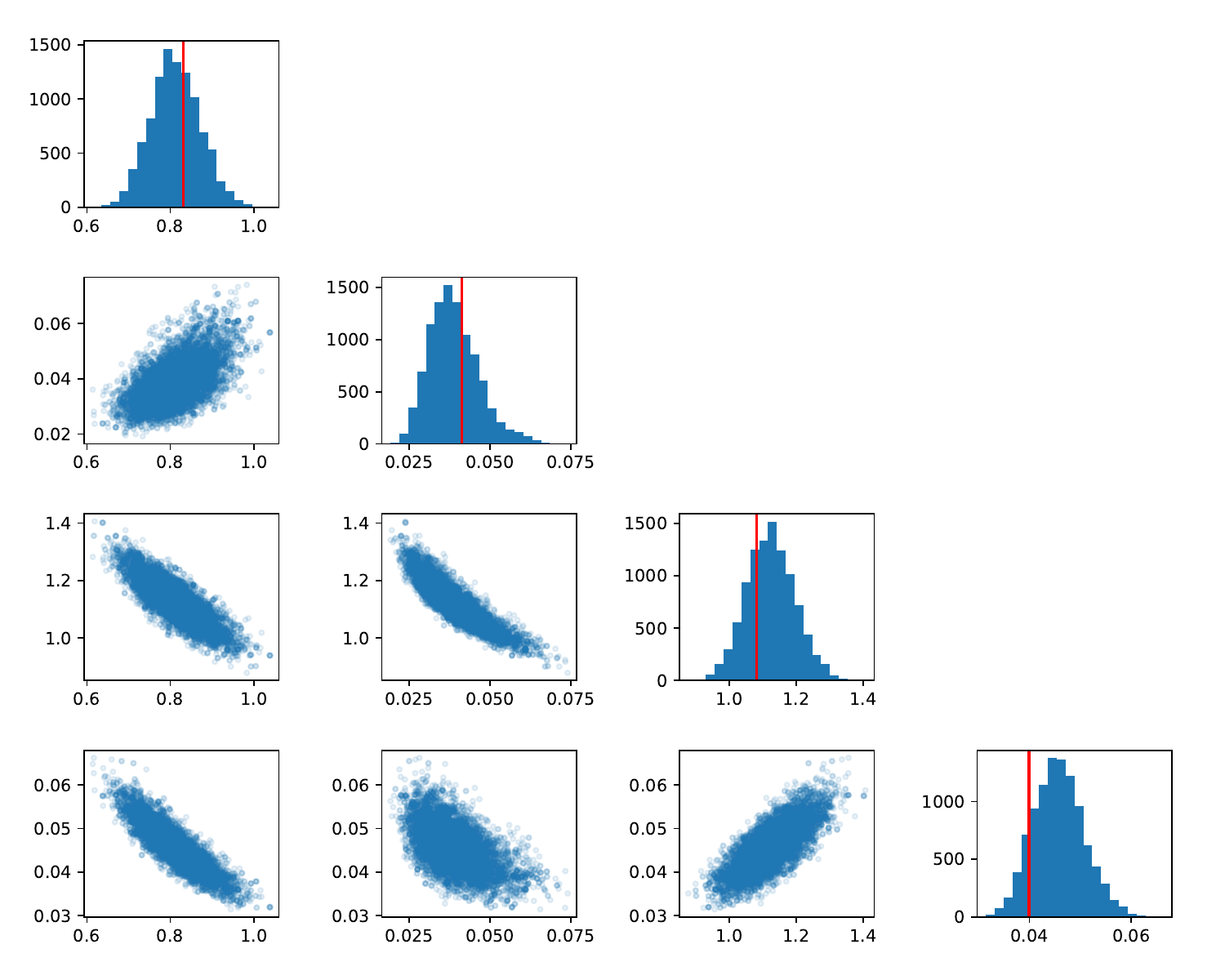}
    \vspace{-0.3cm}
    \caption{Left: approximate posterior samples generated by mapping $10^4$ points from the prior using the map computed from the flow. Right: $10^4$ MCMC samples. Both simulations are compared to the true parameters $x^\ast$ (in red) that generated the observations $y^\ast$ in Figure~\ref{fig:LVobser}. The panels on the main diagonal represent the histogram for each of the four parameters. The off-diagonal panels represent two dimensional projections of the samples drawn from the posterior. Panels on the same column share the same $x$ axis relative to the value of parameter.} %
    \label{fig:LVPost}
\end{figure}

\begin{figure}[!ht]
    \centering
    \vspace{-0.5cm}
    \includegraphics[width=0.45\textwidth]{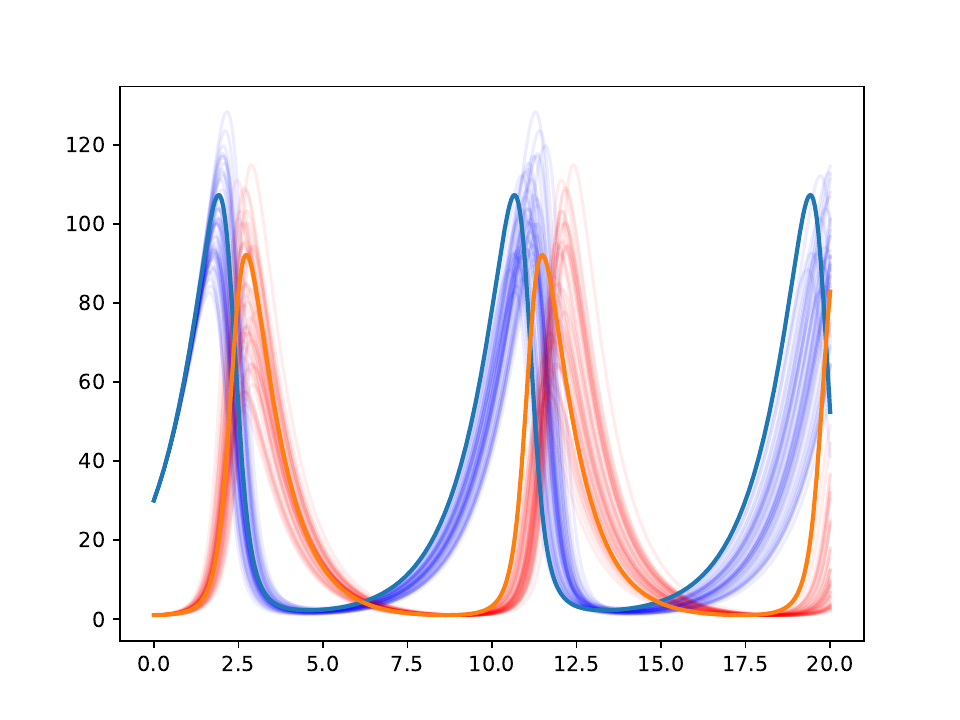}
    \includegraphics[width=0.51\textwidth]{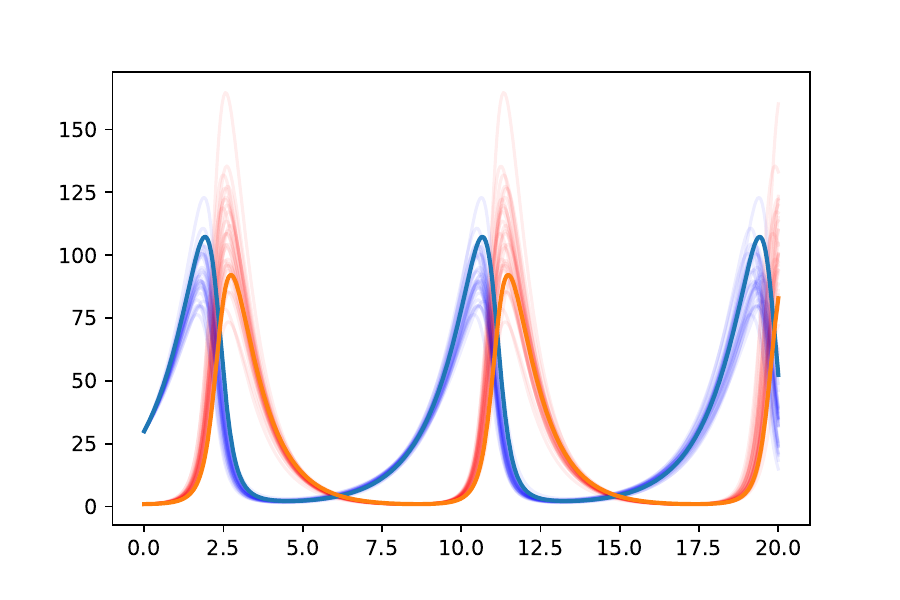}
    \vspace{-0.5cm}
    \caption{Populations as a function of time obtained by solving the ODE model with the parameter posterior samples displayed in Figure \ref{fig:LVPost} from the flow model (left) and MCMC (right). The parameters were chosen to be the 30 closest values (in the Euclidean norm) to the empirical posterior mean.}
    \label{fig:trajectories}
\end{figure}

\section{Conclusions and future work}
This work presents a generative flow model for Bayesian inference where samples of the posterior distribution are generated by pushing forward prior samples through a composition of maps. Finding the flow is entirely data-driven and it is based on the theory of optimal transport (OT) with a weighted $L^2$ cost function. This cost %
yields transport maps with a block-triangular structure, which are are suitable for conditional sampling. The algorithm's performance is illustrated on a two dimensional synthetic example and a Bayesian inference problem involving nonlinear ODEs. %
The advantage of this approach over state-of-the-art algorithms that seek OT maps for conditional sampling is that it only relies on pure minimization steps at each step of the flow rather than solving more challenging min-max optimization problems. The flow is constructed through elementary maps that, acting locally around where the samples are concentrated, gradually push forward the prior to the posterior distribution. Future work includes the possibility of enlarging the feature space for each map by means of projections into %
reproducing Kernel Hilbert spaces in a similar spirit to~\cite{chewi2020svgd}, as well as adapting the features to exploit low-dimensional structure between the reference and target distributions as in~\cite{brennan2020greedy}.

\section*{Acknowledgements}
This work was done as part of the 2022 Polymath Junior summer undergraduate research program, supported by the NSF REU program under award DMS-2218374. RB gratefully acknowledges support from the US Department of Energy AEOLUS center (award DE-SC0019303), the Air Force Office of Scientific Research MURI on ``Machine Learning and Physics-Based Modeling and Simulation'' (award FA9550-20-1-0358), and a Department of Defense (DoD) Vannevar Bush Faculty Fellowship (award N00014-22-1-2790). 

\bibliographystyle{plainnat}
\bibliography{references}

\appendix
\section{Map parametrization and simulation details} \label{sec:parameterization}

In this section we discuss our parameterization for the elementary potential functions $\varphi_\beta(z) = \sum_{j} \beta_j F_j(z)$ that are used in the numerical experiments of Section~\ref{sec:numerical_example}. 

In this work we selected the features $(F_j)$ to be inverse multiquadric kernels or radially-symmetric kernels of the form
\begin{equation}
    F(r)= r\erf\left(\frac{r}{\alpha}\right) +\frac{\alpha e^{-(r/\alpha)^2}}{\sqrt{\pi}}
\end{equation}
where $\alpha \in \R_{>0}$ is the bandwidth and $r=\|z-z_{c}\|$ is the radius for some center point $z_{c} \in \R^n$. This choice aligns with the approach presented in the first modern version of normalizing flows~\citep{tabak2013family}, in which the features apply local expansions or contractions of the sample points around the centers $z_{c}$. 

Once the functional form of the kernels is prescribed, the elementary potential function is completely defined by the choice for the bandwidths $\alpha$ and the centers $z_{c}$. In our numerical experiments, we selected the centers uniformly at random from the  samples $z_t^i \sim \rho_t$ of the reference distribution and the samples $(z')^i \sim \mu$ of the joint (target) distribution. For problems with high-dimensional parameters and observations, such as the Lotka-Volterra example in Section~\ref{sec:LotkaVolterra}, we found that adapting the random sampling for the centers improves the speed of convergence of Algorithm~\ref{alg:mf}. %
In particular, we selected the observation location $y_c$ of the center points $z_c = (y_c,x_c)$ to be near the particular observation of interest, $y^{*}$, more frequently. %
This choice refines the map pushing forward $\rho_t(y,x)$ to $\mu(y,x)$ around $y^{*}$, which is the map used to sample the target conditional $\mu(x|y^*)$.

Lastly, the bandwidth $\alpha$ is chosen according to the rule of thumb described in \citet{trigila2016data}. That is,
\begin{equation} \label{eq:bandwidth}
    \alpha = \left(n_{p}\left(\frac{1}{\tilde{\rho}(z_{c})}+\frac{1}{\tilde{\mu}(z_{c})}\right) \right)^{1/d}
\end{equation}
where $\tilde{\rho}$ and $\tilde{\mu}$ are kernel density estimates (KDE) of the reference and the target distributions, respectively. The rationale behind~\eqref{eq:bandwidth} is to have kernels with a larger bandwidth where there are fewer samples of the reference and the target distributions, and with a smaller bandwidth that can finely resolve the density in regions of the domain where the distributions are more concentrated. Given that the kernel estimator is only needed to compute the scalar bandwidth, they are not meant to be very accurate and updated at every step of the algorithm. %
In this work, the target density is time independent and hence its KDE is computed only once at the beginning of the procedure. The KDE of the reference distribution is instead updated after every 200 steps of the algorithm. The scalar $n_{p} \in \R_{>0}$ is a problem dependent parameter that can be either set to a fixed value (e.g., $n_p = 0.01$ in our experiments) or selected via cross-validation.

To reduce the impact of the specific bandwidth adopted in our procedure, we further multiplied the value of $\alpha$ in~\eqref{eq:bandwidth} by a time dependent constant $m(t) \in \R_{>0}$, which decreases as the algorithm advances. In particular, at the beginning of the experiment, the radius of influence of the kernels (i.e., features that result in local expansions or contractions) are set to be large in order to cover the entire domain containing the samples of $\mu(x)\mu(y)$. As the simulation advances, we gradually decrease the value of $m(t)$ to produce a more localized action of the elementary maps. In our experiments we chose
$$
m(t)=1+\frac{m_{0}}{1+e^{(t-t_{\text{max}})/\sigma}}
$$
with the parameters taken to be $m_{0}=10$, and $\sigma=t_{\text{max}}/10$, where $t_{\text{max}}$ is the maximum number of steps we allow the algorithm to complete.

\section{Derivation of the Jacobian and Hessian} \label{sec:GradientHessian}

In this section we derive expressions for the Jacobian and Hessian of the empirical objective functional $\widehat{\mathcal{J}}_t$ in~\eqref{eq:emp_objective} with respect to the parameters $(\beta_j)$ of the elementary potential function $\varphi_\beta$. To compute these derivative, in Proposition~\ref{prop:asymptotic_expansion} we first derive an expression for the c-transform of the parametric potential function appearing inside the objective functional.

\begin{proposition} \label{prop:asymptotic_expansion} For the cost function $c_{\lambda}(z,z') = \frac{1}{2}(\lambda \|y - y'\|^2 + \|x - x'\|^2),$ let $\varphi_\beta^c(z') = \min_{z} \{c_{\lambda}(z,z')-\varphi_{\beta}(z)\}$ be the c-transform of the differentiable function $\varphi_\beta(z)\colon \R^{n} \rightarrow \R$ where $\varphi_\beta(z) = \sum_j \beta_j F_j(z)$. Then, $\varphi_\beta^c$ has an second-order asymptotic expansion in $\beta$ given by
\begin{equation} \label{eq:second_order_expansion}
\varphi_\beta^c(z') = -\sum_j \beta_j F_j(z') - \frac{1}{2}\sum_{j,k} \beta_j\beta_k\langle \nabla F_j(z') ,\nabla_{\lambda} F_k(z') \rangle + \mathcal{O}(\|\beta\|^3).
\end{equation} %
\end{proposition}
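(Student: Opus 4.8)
\emph{Proof strategy.} The plan is to study the inner minimization defining $\varphi_\beta^c$ through its first-order optimality condition and then Taylor-expand the optimal value in $\beta$, keeping careful track of the order of each contribution. Fix $z'$ and set $g_\beta(z) := c_\lambda(z,z') - \varphi_\beta(z)$. Since $c_\lambda(z,z') = \tfrac12\lambda\|y-y'\|^2 + \tfrac12\|x-x'\|^2$, its $z$-gradient is $(\lambda(y-y'),\,x-x')$, so a stationary point $z^\star=(y^\star,x^\star)$ of $g_\beta$ obeys $\lambda(y^\star-y') = \partial_y\varphi_\beta(z^\star)$ and $x^\star-x' = \partial_x\varphi_\beta(z^\star)$, i.e.\ $z^\star = z' + \nabla_\lambda\varphi_\beta(z^\star)$ with $\nabla_\lambda\varphi = (\partial_y\varphi/\lambda,\partial_x\varphi)$ as in Section~\ref{sec:flows}. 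For $\|\beta\|$ small the map $z \mapsto z' + \nabla_\lambda\varphi_\beta(z)$ has $z$-Jacobian of size $\mathcal{O}(\|\beta\|)$ and is therefore a contraction, so $z^\star = z^\star(\beta)$ exists, is unique, depends smoothly on $\beta$, and $z^\star(0)=z'$; moreover $\nabla_z^2 g_\beta = \mathrm{diag}(\lambda I_m, I_d) - \nabla^2\varphi_\beta = \mathrm{diag}(\lambda I_m, I_d) + \mathcal{O}(\|\beta\|)$ is positive definite, so $z^\star$ is the minimizer and $\varphi_\beta^c(z') = g_\beta(z^\star)$.

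Next I would expand. Writing $\delta := z^\star - z' = \nabla_\lambda\varphi_\beta(z^\star)$, the uniform bound $\nabla_\lambda\varphi_\beta = \mathcal{O}(\|\beta\|)$ gives $\delta = \mathcal{O}(\|\beta\|)$, and feeding this back into the fixed-point relation once yields $\delta = \nabla_\lambda\varphi_\beta(z') + \mathcal{O}(\|\beta\|^2)$. Substituting $z^\star$ into $g_\beta$ and using the stationarity equations, $c_\lambda(z^\star,z') = \tfrac12(\lambda\|y^\star-y'\|^2 + \|x^\star-x'\|^2) = \tfrac12\langle\nabla\varphi_\beta(z^\star),\nabla_\lambda\varphi_\beta(z^\star)\rangle$, which is $\mathcal{O}(\|\beta\|^2)$ and, since $\nabla\varphi_\beta$ and $\nabla_\lambda\varphi_\beta$ are $\mathcal{O}(\|\beta\|)$-Lipschitz and $\delta = \mathcal{O}(\|\beta\|)$, equals $\tfrac12\langle\nabla\varphi_\beta(z'),\nabla_\lambda\varphi_\beta(z')\rangle + \mathcal{O}(\|\beta\|^3)$. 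For the other term, a first-order Taylor expansion gives $\varphi_\beta(z^\star) = \varphi_\beta(z') + \langle\nabla\varphi_\beta(z'),\delta\rangle + \mathcal{O}(\|\beta\|^3)$, the remainder being bounded by $\tfrac12\sup\|\nabla^2\varphi_\beta\|\,\|\delta\|^2 = \mathcal{O}(\|\beta\|)\,\mathcal{O}(\|\beta\|^2)$, and $\langle\nabla\varphi_\beta(z'),\delta\rangle = \langle\nabla\varphi_\beta(z'),\nabla_\lambda\varphi_\beta(z')\rangle + \mathcal{O}(\|\beta\|^3)$. Combining, the bilinear cross term enters with coefficient $\tfrac12 - 1 = -\tfrac12$, so $\varphi_\beta^c(z') = -\varphi_\beta(z') - \tfrac12\langle\nabla\varphi_\beta(z'),\nabla_\lambda\varphi_\beta(z')\rangle + \mathcal{O}(\|\beta\|^3)$; inserting $\varphi_\beta = \sum_j\beta_j F_j$ and expanding the bilinear form term by term reproduces~\eqref{eq:second_order_expansion}.

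The main obstacle is making the order bookkeeping rigorous. The key structural point is that $\varphi_\beta$ scales linearly in $\beta$, so its gradient is $\mathcal{O}(\|\beta\|)$-Lipschitz and its Hessian is $\mathcal{O}(\|\beta\|)$; this is precisely what promotes the naive $\mathcal{O}(\|\beta\|^2)$ Taylor remainders to $\mathcal{O}(\|\beta\|^3)$, and it must be exploited consistently when passing from $z^\star$ to $z'$ inside every expression above. This also shows that ``differentiable'' in the statement should be read as a bit more, namely $C^2$ (equivalently locally $C^{1,1}$) regularity of the features $F_j$, which holds for the kernels used in Appendix~\ref{sec:parameterization}. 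A minor additional point is ensuring the contraction/implicit-function argument is uniform in $z'$ on the relevant region if one wants~\eqref{eq:second_order_expansion} to hold locally uniformly rather than just pointwise.
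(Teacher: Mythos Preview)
Your proposal is correct and follows essentially the same route as the paper: characterize the minimizer via the first-order condition $\bar z = z' + \nabla_\lambda\varphi_\beta(\bar z)$, substitute back into $c_\lambda(\bar z,z')-\varphi_\beta(\bar z)$, and Taylor-expand around $z'$ using that $\bar z - z' = \mathcal{O}(\|\beta\|)$. Your version is in fact more careful than the paper's on the technical side---you supply the contraction/positive-definiteness argument for existence and uniqueness of the minimizer, make the $\mathcal{O}(\|\beta\|^3)$ bookkeeping explicit, and correctly flag that $C^2$ regularity of the features (rather than mere differentiability) is what the argument actually uses.
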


\begin{proof}
Let $\bar{z}$ be the optimal $z$ that  attains the minimum value for the c-transform, i.e., $\varphi_\beta^c(z') = c_\lambda(\bar{z},z') - \varphi_\beta(\bar{z})$. 
For a differentiable function $\varphi_\beta$, the optimal value $\bar{z}$ satisfies $\nabla_{z}c_{\lambda}(\bar{z},z')-\nabla_{z}\varphi_\beta(\bar{z})=0$.  
Thus, for the parametric expansion  $\varphi_\beta(z) = \sum_j \beta_j F_j(z)$ %
we have the condition $\bar{z} = z' + \sum_j \beta_j \nabla_\lambda F_j(\bar{z})$, which defines an implicit function for $\bar{z}$ in terms of $z'$. 

Substituting the expression for $\bar{z}$ in the c-transform gives us
\begin{equation} \label{eq:ctransform}
\varphi_\beta^c(z') = \frac{1}{2}\left(\lambda \Big\|\sum_j \beta_j \nabla_y F_j(\bar{z})/\lambda \Big\|^2 + \Big\|\sum_j \beta_j \nabla_x F_j(\bar{z})\Big\|^2 \right) - \sum_j \beta_j F_j(\bar{z}), \\
\end{equation}
where $\bar{z}$ depends on $z'$. A first-order Taylor series expansion of each feature $F_j$ in the first term of~\eqref{eq:ctransform} around $z'$ yields the following second-order asymptotic expansion in the coefficients $\beta$ 
\begin{align*}
    & \lambda \Big\|\sum_j \beta_j \nabla_y F_j(\bar{z})/\lambda \Big\|^2 + \Big\|\sum_j \beta_j \nabla_x F_j(\bar{z})\Big\|^2 \\
    =& \sum_{j,k} \beta_j\beta_k \langle \nabla_y F_j(z'), \nabla_y F_k(z')/\lambda \rangle + \sum_{j,k} \beta_j\beta_k \langle \nabla_x F_j(z'), \nabla_x F_k(z') \rangle + \mathcal{O}(\|\beta\|^3).
\end{align*}
Similarly, a first-order Taylor series expansion of each feature $F_j$ in the second term of~\eqref{eq:ctransform} around $z'$ yields the asymptotic expansion
$$\sum_j \beta_j F_j(\bar{z}) = \sum_j \beta_j F_j(z') + \sum_{j,k} \beta_j\beta_k \langle \nabla F_j(z'), \nabla_\lambda F_k(z') \rangle + \mathcal{O}(\|\beta\|^3).$$
Substituting these expansions in~\eqref{eq:ctransform},  we arrive at the second-order expansion in~\eqref{eq:second_order_expansion} 
after collecting the quadratic terms in $\beta$ and using the definition of the rescaled gradient.
\end{proof}

Using the result of Proposition 1, a second-order asymptotic expansion in $\beta$ for the empirical objective functional is given by
\begin{align*}
    \widehat{\mathcal{J}}_t(\varphi_\beta) =& \frac{1}{N}\sum_{i=1}^N \left(\sum_j \beta_j F_j(z_t^i) \right) - \\
    &\frac{1}{M}\sum_{i=1}^M \left(\sum_j \beta_j F_j((z')^i) + \frac{1}{2}\sum_{j,k} \beta_j\beta_k\langle \nabla F_j((z')^i) ,\nabla_{\lambda} F_k((z')^i) \rangle + \mathcal{O}(\|\beta\|^3) \right).
\end{align*}
Computing the first and second derivatives of the functional above with respect to each coefficient and evaluating the result at $\beta = 0$ results in the Jacobian and Hessian presented in Section~\ref{sec:flows}.

\vspace{0.5cm}

\end{document}